\documentclass[11pt]{article}

\usepackage[utf8]{inputenc}
\usepackage[T1]{fontenc}
\usepackage[british]{babel}

\usepackage{amsmath, amssymb, amsthm}
\usepackage{mathtools}

\usepackage[expansion=false]{microtype}
\usepackage{geometry}
\usepackage{hyperref}
\hypersetup{
    colorlinks=true,
    linkcolor=black,
    citecolor=black,
    urlcolor=blue,
}

\newtheorem{theorem}{Theorem}

\newtheorem{remark}{Remark}

\DeclareMathOperator{\diag}{diag}

\title{Metric-Aware PCA as a Linear Instance of\\Geometric Deep Learning}

\author{Michael Leznik}

\date{May, 2025}

\begin{document}

\maketitle

\begin{abstract}
Geometric deep learning organises modern neural architectures around the symmetries of the domain on which data live: convolutional networks around translations, graph neural networks around permutations, equivariant networks around continuous group actions. Metric-Aware PCA (MAPCA)~\cite{mapca} parameterises principal component analysis by a positive-definite metric matrix $M$, with the canonical $\beta$-family $M(\beta) = \Sigma^{\beta}$, $\beta \in [0, 1]$, interpolating between standard PCA ($\beta = 0$) and output whitening ($\beta = 1$), and the diagonal metric $M = D = \diag(\Sigma)$ recovering Invariant PCA (IPCA)~\cite{ipca} as a distinguished off-$\beta$ point with strict diagonal scale invariance. This paper positions MAPCA explicitly within the geometric deep learning framework. The metric $M$ is read as the geometric prior; the $M$-orthogonal group $\mathrm{O}(p, M)$ is the symmetry group it induces; MAPCA solutions are equivariant under this group, with the spectrum of $M^{-1}\Sigma$ invariant; and the constraint $W^{\top} M W = I_k$ is the linear, single-layer analogue of the Schur-type weight constraints used in equivariant neural networks. Across six axes---domain, symmetry group, equivariance, invariance, architectural primitive, and geometric prior---we construct a precise dictionary between MAPCA and geometric deep learning. The principal claim of the paper is the dictionary itself; the central technical anchor is a uniqueness theorem characterising IPCA as the unique linear equivariant projector within the MAPCA family under a non-degeneracy criterion on the resulting spectrum, establishing rigorously the structural status of IPCA that the dictionary makes visible.
\end{abstract}

\section{Introduction}

Much of the past decade's progress in deep learning architecture design has been organised, in retrospect, by a single principle: design architectures whose internal operations respect the symmetries of the domain on which the data live. Convolutional networks exploit translation equivariance on regular grids; graph neural networks exploit permutation equivariance on relational data; equivariant networks exploit continuous group actions on point clouds, molecules, and physical fields. The geometric deep learning programme of Bronstein, Bruna, Cohen, and Veli\v{c}kovi\'{c}~\cite{bronstein2021gdl} formalises this principle and reads the architecture zoo as instances of a single design space indexed by the symmetry group of the domain and the representation on which the network operates. The principle, however, is not specific to deep architectures. Whenever a representation method is to be built, the same question---under which group of input transformations should the representation be equivariant or invariant---can be asked, and asking it sharpens the answer.

Principal component analysis is the canonical linear representation method, and yet its symmetry structure is usually left implicit. Standard PCA solves a variance-maximisation problem under a unit-norm constraint computed in the Euclidean inner product, and is therefore equivariant under the orthogonal group $\mathrm{O}(p)$ acting on the input coordinates. The choice of inner product is rarely interrogated, and the consequences of changing it are not part of the standard exposition. Metric-Aware PCA (MAPCA)~\cite{mapca} makes this choice explicit. By replacing the Euclidean constraint $W^{\top} W = I$ with a metric-aware constraint $W^{\top} M W = I$ for an arbitrary symmetric positive-definite matrix $M$, MAPCA generalises PCA to a family of representation methods indexed by the choice of metric. A canonical one-parameter subfamily is given by $M(\beta) = \Sigma^{\beta}$ with $\beta \in [0, 1]$, in which $\beta = 0$ recovers standard PCA ($M = I$) and $\beta = 1$ corresponds to full output whitening ($M = \Sigma$); intermediate $\beta$-values trace a continuous path with monotonically decreasing condition number, providing explicit control over spectral compression. A distinguished off-$\beta$ point in the broader MAPCA family is the diagonal metric $M = D = \diag(\Sigma)$, which recovers Invariant PCA (IPCA)~\cite{ipca}: a method uniquely characterised by strict invariance under arbitrary diagonal rescaling of the input, a property that holds precisely because $D$ transforms equivariantly as $\widetilde{D} = C D C$ under diagonal rescaling $C$.

This paper positions MAPCA explicitly within the geometric deep learning framework. The central claim is that the choice of metric $M$ in MAPCA plays the same structural role as the choice of symmetry group and representation in GDL, and that the resulting mapping is precise rather than analogical. We organise the dictionary across six axes: the domain on which MAPCA operates and its placement in the 5G taxonomy; the symmetry group $\mathrm{O}(p, M)$ the metric induces; the equivariance of MAPCA subspaces under that group; the invariance of the resulting spectrum; the architectural reading of MAPCA as a single equivariant linear primitive with a Schur-type weight constraint; and the metric $M$ itself as the geometric prior in the GDL sense. The contribution is primarily conceptual---precise statements of what is equivariant under what, and what the architectural primitive looks like---and is anchored by a formal uniqueness theorem (Section~\ref{sec:uniqueness}) characterising IPCA as the unique linear equivariant projector within the MAPCA family under a non-degeneracy criterion, which we read as the variance-maximisation criterion in its precise form.

The remainder of the paper is organised as follows. Section~2 recalls the relevant pieces of geometric deep learning and of MAPCA. Section~3 develops the dictionary across four subsections covering domain placement, symmetry structure, MAPCA as equivariant primitive, and the metric as geometric prior. Section~4 establishes the formal uniqueness theorem characterising IPCA as the unique linear equivariant projector within the MAPCA family. Section~5 traces deeper bridges into kernel methods, spectral methods on graphs, and stacked equivariant layers. Section~6 discusses open directions, including implications for tabular-data GDL. Section~7 concludes.

\section{Preliminaries}\label{sec:prelim}

\subsection{Geometric Deep Learning Essentials}\label{sec:gdl-essentials}

Geometric deep learning is organised around the idea that the structure of a representation learning problem is determined by a group acting on the input domain. Let $\Omega$ be a domain on which data live, and let $G$ be a group equipped with an action $G \times \Omega \to \Omega$ satisfying $e \cdot x = x$ and $(gh) \cdot x = g \cdot (h \cdot x)$. A \emph{linear representation} of $G$ on a vector space $V$ is a group homomorphism $\rho: G \to \mathrm{GL}(V)$, assigning to each $g \in G$ an invertible linear map $\rho(g)$ on $V$ in a way compatible with composition. Functions on $\Omega$ inherit a representation: if $f: \Omega \to V$ is a feature map, the $G$-action transforms it as $(g \cdot f)(x) = \rho(g) f(g^{-1} x)$. The choice of $(G, \rho)$ is what gives a representation learning problem its geometric structure.

Two notions are central. A linear map $\psi: V \to W$ between $G$-representations $(\rho, \rho')$ is \emph{equivariant} if $\psi(\rho(g) v) = \rho'(g) \psi(v)$ for all $g \in G$ and $v \in V$---that is, $\psi$ commutes with the group action, mapping $V$ to $W$ in a way that respects the symmetry. A special case is \emph{invariance}, in which $\rho'$ is the trivial representation: $\psi(\rho(g) v) = \psi(v)$ for all $g$. Equivariant maps preserve the symmetry along the network; invariant readouts collapse it. A typical equivariant architecture interleaves equivariant linear maps with equivariant nonlinearities and terminates in an invariant readout when the task itself is invariant.

The 5G taxonomy of Bronstein et al.~\cite{bronstein2021gdl} organises representation domains by the type of symmetry they carry. \emph{Grids} are the regular tilings on which translation acts, recovering convolutional networks. \emph{Groups} refers to homogeneous spaces under a Lie group action, with $G$-equivariant convolutions as the canonical primitive. \emph{Graphs} are domains symmetric under permutation of nodes, leading to graph neural networks and message passing. \emph{Geodesics} refers to Riemannian manifolds with intrinsic curvature not reducible to a global coordinate system. \emph{Gauges} refers to the most abstract case of local frames on non-trivial bundles, where no canonical global trivialisation exists. The MAPCA framework will turn out to occupy a position in the \emph{Groups} tier with a constant metric tensor.

For our purposes, the most important architectural primitive is the \emph{equivariant linear layer}. A linear map $\Phi: V \to W$ between $G$-representations $(\rho, \rho')$ is equivariant if and only if it intertwines the representations: $\Phi \rho(g) = \rho'(g) \Phi$ for all $g \in G$. By Schur's lemma and its consequences, this constraint substantially restricts the space of admissible weights---equivariant linear maps live in the space of intertwiners, which has a block-diagonal structure determined by the irreducible components of $\rho$ and $\rho'$. The design of equivariant networks reduces to characterising this space of allowed weights and parameterising it efficiently. We shall see in Section~3.3 that the MAPCA constraint $W^{\top} M W = I_k$ is precisely an intertwining condition of this form, written in a particular basis.

\subsection{The MAPCA Framework}\label{sec:mapca-recap}

We recall the MAPCA framework of~\cite{mapca}; full proofs, the SSL correspondences, and the empirical verification appear there, and we summarise here only what the dictionary will need.

Let $X \in \mathbb{R}^{n \times p}$ be a mean-centred data matrix with sample covariance $\Sigma$. The MAPCA framework solves the generalised variance-maximisation problem
\begin{equation}
\max_{W} \, \operatorname{Tr}(W^{\top} \Sigma W) \quad \text{subject to} \quad W^{\top} M W = I_k,
\label{eq:mapca}
\end{equation}
where $M \in \mathcal{M}$ is a symmetric positive-definite matrix on $\mathbb{R}^p$---the \emph{metric matrix}---and $W \in \mathbb{R}^{p \times k}$ selects $k$ components. The solutions are the leading eigenvectors of the generalised eigenproblem $\Sigma w = \lambda M w$, equivalently of the effective operator $A = M^{-1/2} \Sigma M^{-1/2}$. Standard PCA is the special case $M = I$.

The canonical one-parameter subfamily is the $\beta$-family $M(\beta) = \Sigma^{\beta}$ with $\beta \in [0, 1]$, where $\Sigma^{\beta}$ is defined via the spectral decomposition of $\Sigma$. The effective operator on this subfamily reduces to $A(\beta) = \Sigma^{1-\beta}$, with eigenvalues $\lambda_i^{1-\beta}$ and eigenvectors identical to those of $\Sigma$. The condition number $\kappa(\beta) = (\lambda_1 / \lambda_p)^{1-\beta}$ decreases monotonically from the standard-PCA value at $\beta = 0$ to one at $\beta = 1$. At $\beta = 0$ the metric is $I$ and the framework recovers standard PCA; at $\beta = 1$ the metric is $\Sigma$ and the framework recovers full output whitening, in which all eigenvalues of the effective operator equal one.

A distinguished point in the broader MAPCA family lies \emph{off} the $\beta$-curve: the diagonal metric
\[
M = D = \diag(\sigma_1^2, \ldots, \sigma_p^2),
\]
where $\sigma_i^2 = \Sigma_{ii}$ are the marginal variances. With $M = D$, the effective operator is $A_D = D^{-1/2} \Sigma D^{-1/2}$, whose eigenvalues coincide with those of the correlation matrix of $X$. This choice recovers Invariant PCA (IPCA)~\cite{ipca}. The matrix $D$ is the diagonal approximation to $\Sigma$, and in general it does not coincide with $\Sigma^{\beta}$ for any $\beta \in [0, 1]$; the geometric content of IPCA is therefore distinct from the spectral interpolation provided by the $\beta$-family.

The defining property of IPCA is its strict scale invariance. Let $C = \diag(c_1, \ldots, c_p)$ with $c_i > 0$ be an arbitrary diagonal rescaling of the input variables, and let $\widetilde{X} = XC$ denote the rescaled data, with covariance $\widetilde{\Sigma} = C \Sigma C$. The MAPCA solutions on the original and rescaled data satisfy $\widetilde{\lambda}_i = \lambda_i$ and $\widetilde{w}_i = C^{-1} w_i$ for all $i$ if and only if the metric transforms covariantly as
\begin{equation}
\widetilde{M} = C M C.
\label{eq:scaleinv}
\end{equation}
For $M = D$ one has $\widetilde{D} = C D C = \diag(c_i^2 \sigma_i^2)$, so the condition holds exactly and IPCA is strictly scale-invariant. For $M(\beta) = \Sigma^{\beta}$ with $\beta \in (0, 1)$, the condition requires $(C \Sigma C)^{\beta} = C \Sigma^{\beta} C$, which fails in general unless $C = cI$ is a uniform rescaling or $\beta \in \{0, 1\}$. IPCA is therefore the unique member of the MAPCA family that retains strict scale invariance under arbitrary diagonal rescaling while preserving non-trivial spectral structure.

These three elements---the framework over arbitrary $M \in \mathcal{M}$, the $\beta$-family as a spectral interpolation, and IPCA as the distinguished off-$\beta$ diagonal-metric point---are the only MAPCA results needed for the dictionary that follows.

\section{The Mapping}\label{sec:mapping}

\subsection{Domain Placement}\label{sec:domain}

The first axis of the dictionary concerns the domain on which MAPCA operates. In the standard MAPCA formulation, the data live in $\mathbb{R}^p$, and the choice of a positive-definite metric $M \in \mathcal{M}$ equips this space with an inner product $\langle u, v \rangle_M = u^{\top} M v$. The pair $(\mathbb{R}^p, M)$ is a vector space with a constant inner-product structure---the inner product is the same at every point, and the geometry it induces is uniform across the space. Distances and angles depend on $M$, but the linear structure of $\mathbb{R}^p$ is unchanged.

In the 5G taxonomy of~\cite{bronstein2021gdl}, this places MAPCA in the \emph{Groups} tier rather than the \emph{Geodesics} tier. The \emph{Geodesics} setting is intended for domains where the inner product varies from point to point---surfaces, learned manifolds, or curved spaces where comparing vectors at different points and computing distances both require nontrivial differential machinery. The \emph{Groups} setting, by contrast, covers domains carrying a transitive symmetry group, and is the natural home for MAPCA: $(\mathbb{R}^p, M)$ is symmetric under translations together with the $M$-orthogonal group, and once the data are mean-centred---the standard MAPCA setting---the structurally relevant subgroup is $\mathrm{O}(p, M)$ acting at the origin (Section~3.2). The entire geometric content of the domain is captured by this group action; no further differential or coordinate machinery is required.

This placement determines what GDL machinery applies. The constant-metric setting inherits from the \emph{Groups} tier its clean separation of equivariance and invariance, its Schur-type characterisation of admissible equivariant linear maps (Section~3.3), and its natural notion of representation on functions over the domain. It does not require the more elaborate machinery of \emph{Geodesics} and \emph{Gauges}: comparing vectors at different points is unambiguous (the metric is the same everywhere), there is no further symmetry structure beyond $\mathrm{O}(p, M)$, and no notion of curvature enters the picture. This is the right level of geometric structure for the present positioning---enough to give MAPCA a meaningful equivariance story, not so much that the linear, finite-dimensional nature of the framework is obscured.

One further structural point is worth making before the symmetry-group machinery begins. Changing $M$ alters the \emph{geometry} of the domain but not the \emph{underlying set}: $\mathbb{R}^p$ is the same set across all choices of $M$, while the inner-product structure that turns it into a metric space varies with $M$. The MAPCA $\beta$-family and the IPCA diagonal point can therefore be read as a \emph{family of geometric structures on a fixed carrier space}, each inducing its own symmetry group and its own notion of equivariance. This perspective sets up Section~3.2, in which the choice of $M$ is mapped explicitly to the choice of symmetry group acting on the domain.

\subsection{Symmetry Structure}\label{sec:symmetry}

This subsection develops axes 2, 3, and 4 of the dictionary: the symmetry group, the equivariance, and the invariance. There are two distinct GDL stories to tell, and the dictionary requires both. The first is internal to MAPCA and holds for any choice of metric. The second is conditional on a choice of input transformation group and singles out IPCA as a privileged construction.

\paragraph{The internal symmetry: $\mathrm{O}(p, M)$.}

For any positive-definite metric $M$, the $M$-orthogonal group is the group of linear transformations of $\mathbb{R}^p$ preserving the $M$-inner product:
\[
\mathrm{O}(p, M) = \{Q \in \mathrm{GL}(p) : Q^{\top} M Q = M\}.
\]
It is conjugate to the standard orthogonal group $\mathrm{O}(p)$ via any matrix square root of $M$, and for $M = I$ it recovers $\mathrm{O}(p)$ itself. The pair $(\mathbb{R}^p, M)$ is symmetric under $\mathrm{O}(p, M)$, in the sense that this group acts on $(\mathbb{R}^p, M)$ by isometries---distances and angles measured by $M$ are preserved by any $Q \in \mathrm{O}(p, M)$.

Each choice of $M$ selects a different intrinsic symmetry group. Standard PCA carries $\mathrm{O}(p, I) = \mathrm{O}(p)$; $\beta$-family members carry $\mathrm{O}(p, \Sigma^{\beta})$; IPCA carries $\mathrm{O}(p, D)$, the group of transformations preserving the diagonal metric. Three GDL points follow.

First, the MAPCA representation is equivariant under $\mathrm{O}(p, M)$: under $Q \in \mathrm{O}(p, M)$ acting on the data with the metric held fixed, the principal subspaces transform with the group action. This is the linear-algebraic analogue of the equivariance enjoyed by CNNs under translation or by spherical CNNs under rotation.

Second, the MAPCA spectrum---the eigenvalues of $M^{-1} \Sigma$---is invariant under $\mathrm{O}(p, M)$. It is the natural scalar readout one would associate to the equivariant feature map ``principal subspaces'', playing the role that label predictions or pooled statistics play in deeper equivariant networks.

Third, the architectural primitive $W^{\top} M W = I_k$ that defines the MAPCA constraint is itself a representation-theoretic constraint compatible with $\mathrm{O}(p, M)$-equivariance. We unpack this connection in Section~3.3.

\paragraph{The tensorial story: $M$ as a covariant 2-tensor and Theorem~7 of~\cite{mapca}.}

The internal symmetry group $\mathrm{O}(p, M)$ is small and varies with $M$. It does not in general contain transformations of practical interest---most importantly, the group of diagonal rescalings, which corresponds to changing units of measurement. To understand when the MAPCA representation respects symmetries lying \emph{outside} $\mathrm{O}(p, M)$, we need a second GDL story.

Consider an external input transformation $C \in \mathrm{GL}(p)$ acting on the data as $x \mapsto C x$. The covariance transforms contravariantly as $\Sigma \to C \Sigma C^{\top}$. If $M$ is treated as a fixed object of the input space, the new MAPCA problem $(C \Sigma C^{\top}, M)$ will generally have a different spectrum than the original $(\Sigma, M)$. For the MAPCA representation to be equivariant under $C$, the metric must itself transform in the appropriate tensorial manner.

Theorem~7 of~\cite{mapca} gives the precise condition. For symmetric input transformations $C$---in particular for the diagonal rescalings of physical interest---the MAPCA solutions transform equivariantly ($\widetilde{\lambda} = \lambda$, $\widetilde{w} = C^{-1} w$) if and only if the metric transforms as a covariant 2-tensor, reproducing equation~\eqref{eq:scaleinv} of Section~\ref{sec:mapca-recap} in its GDL setting. The structural reading is the central design principle of geometric deep learning written in linear-algebraic form: a representation is equivariant under a group exactly when the auxiliary geometric objects defining the representation transform compatibly with the group action. Here the geometric object is the metric $M$, and the question of which equivariances MAPCA can sustain reduces to a question about the transformation rules its metric satisfies.

\paragraph{Synthesis: IPCA as a privileged GDL construction.}

The two stories together yield the full symmetry-structure picture for MAPCA. For \emph{every} $M \in \mathcal{M}$, the MAPCA representation carries the internal $\mathrm{O}(p, M)$-equivariance and the corresponding invariant spectrum. This holds uniformly across the $\beta$-family and at IPCA.

For the \emph{broader} group of arbitrary diagonal rescalings---a group of practical importance and one that generally lies outside $\mathrm{O}(p, M)$---equivariance requires the metric to satisfy the tensorial condition~\eqref{eq:scaleinv}. Within the canonical MAPCA family, this condition picks out IPCA. Standard PCA fails the condition immediately, since $CIC = C^2$ is only equal to $I$ for trivial rescalings. At intermediate $\beta$ the condition $(C \Sigma C)^{\beta} = C \Sigma^{\beta} C$ fails in general, holding only when $C = cI$ is uniform or when $\beta$ coincides with an endpoint. The whitening endpoint $M = \Sigma$ satisfies the condition trivially---the metric inherits the input covariance---but the resulting MAPCA spectrum is degenerate, with all eigenvalues equal to one, and the equivariance is vacuous. IPCA's diagonal metric, by contrast, satisfies the condition exactly: $CDC = \diag(c_i^2 \sigma_i^2)$ is precisely $\diag(C \Sigma C)$, so the data-derived metric transforms tensorially while the spectrum retains non-trivial structure.

IPCA is therefore the unique member of the MAPCA family achieving simultaneous internal $\mathrm{O}(p, D)$-equivariance \emph{and} tensorial equivariance under arbitrary diagonal rescaling, while preserving non-trivial spectral structure. In GDL terms, the diagonal-metric construction $M = \diag(\cdot)$ is privileged because it is the unique data-derived operation on positive-definite matrices that respects the action of the diagonal subgroup of $\mathrm{GL}(p)$. The strict scale invariance of IPCA is the structural consequence of this tensorial consistency---not a coincidence of the diagonal metric, but the very thing that makes the diagonal metric the right GDL construction for this particular group.

\subsection{MAPCA as an Equivariant Linear Primitive}\label{sec:primitive}

The fifth axis of the dictionary concerns the \emph{architectural reading} of MAPCA: how the constraint $W^{\top} M W = I_k$ that defines the MAPCA optimisation should be understood alongside the design constraints of equivariant neural network layers. The claim is that the MAPCA constraint is an instance of the orthonormality conditions imposed on equivariant linear layers, written in a basis adapted to a specific pair of representations.

\paragraph{Constraints on equivariant linear layers.}

In a $G$-equivariant network, a linear layer $\Phi$ between vector spaces $V$ and $V'$ carrying $G$-representations $(\rho, \rho')$ is required to commute with the group action: $\Phi \rho(g) = \rho'(g) \Phi$ for all $g \in G$. By Schur's lemma and its consequences, the space of such intertwiners has a block-diagonal structure determined by the multiplicities of irreducible representations in $\rho$ and $\rho'$. This intertwining constraint fixes the \emph{algebraic form} of admissible weights. Equivariant layer design typically imposes an additional constraint---orthonormality, unitarity, or weight-norm---on top of the intertwining one, fixing the layer's \emph{unitary structure} and controlling its geometry and stability. The two constraints play distinct roles: the first specifies which linear maps are equivariant at all; the second specifies which equivariant maps respect the inner product structure on either side.

\paragraph{The MAPCA constraint as the orthonormality condition.}

In MAPCA, the linear map $W : \mathbb{R}^k \to \mathbb{R}^p$ sends a $k$-dimensional code to its reconstruction in the input space; equivalently $W^{\top} : \mathbb{R}^p \to \mathbb{R}^k$ is the projection direction. The constraint $W^{\top} M W = I_k$ is the orthonormality condition on the columns of $W$ with respect to the $M$-inner product on the input side:
\[
\langle w_i, w_j \rangle_M = \delta_{ij}.
\]
Equivalently, $W$ is an \emph{isometric embedding} of Euclidean $k$-space into the $M$-metric input space: for any $u, v \in \mathbb{R}^k$,
\[
\langle Wu, Wv \rangle_M = u^{\top} (W^{\top} M W) v = u^{\top} I_k v = \langle u, v \rangle.
\]
This is exactly the unitary-structure constraint of equivariant layer design, written in a basis where the input space carries the $M$-inner product and the output space carries the standard Euclidean structure. The role corresponding to ``intertwining'' is played here by the equivariance properties developed in Section~\ref{sec:symmetry}; the constraint $W^{\top} M W = I_k$ is the orthonormality condition layered on top.

\paragraph{The gauge group on the output side.}

The constraint $W^{\top} M W = I_k$ is preserved by the right action $W \mapsto WR$ for $R \in \mathrm{O}(k)$:
\[
(WR)^{\top} M (WR) = R^{\top} (W^{\top} M W) R = R^{\top} R,
\]
which equals $I_k$ precisely when $R \in \mathrm{O}(k)$. The principal subspace $\operatorname{span}(w_1, \ldots, w_k)$ is determined by MAPCA; the orthonormal basis within it is determined only up to this $\mathrm{O}(k)$ action, exactly as the gauge freedom in any equivariant network's output basis.

\paragraph{The metric $M$ as the choice of input-side representation.}

The architectural content of MAPCA can now be stated cleanly. A MAPCA solution is a single linear layer $W : \mathbb{R}^k \to \mathbb{R}^p$ constrained to be an isometric embedding of Euclidean $k$-space into the $M$-metric input space $\mathbb{R}^p$. Standard PCA corresponds to $M = I$, where the input geometry is also Euclidean and the constraint reduces to ordinary orthonormality $W^{\top} W = I_k$. Each choice of $M$ induces a different input-side geometry, and the MAPCA layer is the partial isometry from Euclidean $k$-space into this geometry. The $\beta$-family traces a continuous path of such geometries; IPCA selects the distinguished diagonal one. The geometric content of these choices---what each $M$ ``means'' as a prior on the data---is the subject of Section~\ref{sec:prior}.

The architectural reading places MAPCA cleanly in the GDL design space: a single equivariant linear layer whose orthonormality constraint encodes the choice of input-side geometry, written in a basis adapted to that geometry. A deep MAPCA-style architecture would stack $M$-aware partial isometries of this kind interleaved with appropriate nonlinearities; this natural extension into proper deep learning is discussed in Section~4.3.

\subsection{The Metric as Geometric Prior}\label{sec:prior}

The sixth and final axis of the dictionary completes the picture by identifying what $M$ ``is'' in GDL terms. The previous subsections have located the MAPCA domain (Section~\ref{sec:domain}), worked through the symmetry group and equivariance structure (Section~\ref{sec:symmetry}), and given the architectural reading of the constraint $W^{\top} M W = I_k$ (Section~\ref{sec:primitive}). What remains is the synthesising interpretation: $M$ is the \emph{geometric prior} of MAPCA, the linear-case instance of the structural design decision that GDL formalises through the choice of group and representation.

\paragraph{Geometric priors in GDL.}

In geometric deep learning, the choice of $(G, \rho)$---the symmetry group acting on the data and its representation on the feature space---is a \emph{prior} in the strict sense. It is a structural decision fixed by the analyst before any data is seen, encoding beliefs about which transformations of the input should be treated as equivalent and which features should accordingly transform compatibly. Convolutional networks encode the prior that translations are equivalence relations on image patches; graph neural networks encode the prior that nodes are interchangeable up to graph structure; $\mathrm{SE}(3)$-equivariant networks encode the prior that the orientation and position of molecules in space are arbitrary. The prior is not learned but structural: it constrains the hypothesis class to functions that respect $(G, \rho)$. What the network can and cannot represent---what is and is not in its model space---is set by this choice.

\paragraph{The metric $M$ as the linear-case geometric prior.}

In MAPCA, the metric $M$ plays exactly this role. It encodes the analyst's prior over which input geometry the representation should respect---what counts as a ``natural unit'' in each direction, which transformations of the input the spectrum should be invariant under, which features should be treated as equivalent. The four canonical points in the MAPCA family correspond to four distinct geometric priors. $M = I$ is the data-blind prior: no direction is privileged a priori, all coordinates are treated as commensurate, and the resulting representation is the standard PCA decomposition. $M = D = \diag(\Sigma)$ is the marginally scale-invariant prior: each coordinate is implicitly standardised by its own marginal variance, and the representation is invariant under arbitrary diagonal rescalings, but no further whitening is performed. $M = \Sigma$ (the $\beta = 1$ endpoint) is the data-conforming prior: the metric absorbs the full second-moment structure of the data, and the representation is forced to be isotropic. Intermediate $\beta$-values trace a continuous path between data-blind and data-conforming, with $\beta$ controlling how much of the data covariance is absorbed into the prior.

\paragraph{The dictionary entry and the SSL correspondences.}

The dictionary entry can now be stated explicitly: the metric $M$ in MAPCA is the geometric prior, playing the same structural role as the choice of $(G, \rho)$ in GDL. This closes the six-axis dictionary developed across Sections~\ref{sec:domain}--\ref{sec:prior} and makes precise what the MAPCA paper's self-supervised-learning correspondences are doing geometrically. The SSL correspondences of~\cite{mapca}---Barlow Twins and ZCA whitening at $M = \Sigma$ (full output whitening as the encoded prior), VICReg's variance term at $M = D$ (the IPCA prior of scale invariance without full whitening), and W-MSE at $M = \Sigma^{-1}$ (input whitening, in the opposite spectral direction)---can each be read as a different choice of geometric prior within a common design space. Without the dictionary, these methods present as a heterogeneous collection of loss functions; with the dictionary, they are visible as four distinct structural priors over the same family, differing in which input-side geometry they encode and which equivariances they consequently sustain.

This is the structural insight that the MAPCA-as-GDL positioning makes available. The next section formalises the structural status of IPCA as a uniqueness theorem within the MAPCA family, and Section~\ref{sec:bridges} traces three deeper bridges from the dictionary into kernel methods, spectral methods on graphs, and stacked equivariant layers.

\section{Uniqueness of IPCA}\label{sec:uniqueness}

In this section we formalise the structural claim made informally in Section~\ref{sec:symmetry}: IPCA is the unique linear data-derived metric within the MAPCA family that respects arbitrary diagonal rescaling and yields a non-degenerate spectrum. The result upgrades the case-by-case argument of Section~\ref{sec:symmetry}'s synthesis from a check over four canonical points to a characterisation within a precisely specified function class, and provides the formal technical anchor for the conceptual claims of the paper.

\subsection{Statement of the Theorem}

We work within the class of \emph{linear data-derived metrics}: maps $f : \mathcal{M} \to \mathcal{M}$ from the cone of positive-definite $p \times p$ matrices to itself, linear in the sense that $f(\alpha \Sigma + \beta \Sigma') = \alpha f(\Sigma) + \beta f(\Sigma')$ for $\alpha, \beta \geq 0$ and $\Sigma, \Sigma' \in \mathcal{M}$. Each such map gives rise to a MAPCA representation with metric $M = f(\Sigma)$ and corresponding optimisation problem
\[
\max_{W} \, \operatorname{Tr}(W^{\top} \Sigma W) \quad \text{subject to} \quad W^{\top} f(\Sigma) W = I_k.
\]

The action of the positive diagonal subgroup $\mathrm{D}_+(p) \subset \mathrm{GL}(p)$ on $\mathcal{M}$ is by conjugation $\Sigma \mapsto C \Sigma C$ for $C \in \mathrm{D}_+(p)$. The map $f$ is \emph{equivariant} under this action if
\begin{equation}
f(C \Sigma C) = C f(\Sigma) C \quad \text{for all } C \in \mathrm{D}_+(p), \, \Sigma \in \mathcal{M}.
\label{eq:f-equivariance}
\end{equation}
The fixed-point set of the $\mathrm{D}_+(p)$-action on $\mathcal{M}$ consists exactly of diagonal positive-definite matrices, which we denote $\mathcal{M}_{\mathrm{diag}}$.

\begin{theorem}[Uniqueness of IPCA]\label{thm:uniqueness}
Let $f : \mathcal{M} \to \mathcal{M}$ be a linear map satisfying
\begin{enumerate}
\item[(i)] Equivariance: $f(C \Sigma C) = C f(\Sigma) C$ for all $C \in \mathrm{D}_+(p)$ and $\Sigma \in \mathcal{M}$.
\item[(ii)] Projection onto fixed points: $f(\Sigma) \in \mathcal{M}_{\mathrm{diag}}$ for all $\Sigma \in \mathcal{M}$.
\item[(iii)] Normalisation: $f(I) = I$.
\end{enumerate}
Then $f = \diag$, the diagonal extractor sending $\Sigma$ to $\diag(\Sigma_{11}, \ldots, \Sigma_{pp})$. The resulting MAPCA representation is IPCA, and its spectrum (the eigenvalues of $D^{-1} \Sigma$) is generically non-degenerate.
\end{theorem}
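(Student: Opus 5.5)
The plan is to reduce the theorem to a statement about the coefficients of a linear map on symmetric matrices, and then let equivariance (i) act as a ``weight'' or monomial-matching condition that pins down those coefficients.

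\textbf{Step 1: extend $f$ to all symmetric matrices.} The cone $\mathcal{M}$ is open in the space $\mathrm{Sym}(p)$ of symmetric matrices and spans it. Every symmetric $S$ can be written as $S = \Sigma - \Sigma'$ with $\Sigma,\Sigma' \in \mathcal{M}$, for instance $\Sigma = S + tI$ and $\Sigma' = tI$ with $t$ large. A map that is additive and positively homogeneous on the cone therefore extends uniquely to a linear map $F : \mathrm{Sym}(p) \to \mathrm{Sym}(p)$. Well-definedness follows from additivity on $\mathcal{M}$. The relation $F(CSC) = CF(S)C$ then follows from (i) by applying (i) separately to $\Sigma$ and $\Sigma'$. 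We may thus write
\[
F(S)_{ab} = \sum_{j \le k} L_{ab,jk}\, S_{jk}
\]
for real coefficients $L_{ab,jk}$.

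\textbf{Step 2: use equivariance to kill mixed coefficients.} Take $C = \diag(c_1,\dots,c_p)$. Then $(CSC)_{jk} = c_j c_k S_{jk}$ and $(CF(S)C)_{ab} = c_a c_b F(S)_{ab}$. Evaluating on the basis matrices $E_{jk} + E_{kj}$, condition (i) gives
\[
L_{ab,jk}\, c_j c_k = L_{ab,jk}\, c_a c_b \quad \text{for all } c \in (0,\infty)^p .
\]
Two monomials agree on the open positive orthant only if they are identical, so $L_{ab,jk} = 0$ unless $\{j,k\} = \{a,b\}$ as multisets. Hence $F(S)_{ab} = \lambda_{ab} S_{ab}$, i.e.\ $F$ is a Hadamard (Schur) multiplier.

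\textbf{Step 3: apply (ii) and (iii).} Condition (ii) says $f(\Sigma)$ is diagonal for every $\Sigma \in \mathcal{M}$. Choosing $\Sigma \in \mathcal{M}$ with $\Sigma_{ab} \neq 0$ for $a \neq b$ forces $\lambda_{ab} = 0$ off the diagonal. Condition (iii), $f(I) = I$, gives $\lambda_{aa} = 1$. Therefore $f = \diag$. With $M = D$ this is exactly the IPCA metric of Section~\ref{sec:mapca-recap}, so the resulting MAPCA problem is IPCA.

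\textbf{Step 4: generic non-degeneracy.} The eigenvalues of $D^{-1}\Sigma$ coincide with those of the correlation matrix $R = D^{-1/2}\Sigma D^{-1/2}$. The discriminant of the characteristic polynomial of $D^{-1}\Sigma$ is a rational function of the entries of $\Sigma$ whose denominator (a power of $\det D$) does not vanish on $\mathcal{M}$. It is not identically zero: exhibiting a single correlation matrix with distinct eigenvalues suffices, e.g.\ a tridiagonal one with small distinct off-diagonal entries, or $p=2$ with $\rho \neq 0$ and a perturbative extension. Its zero set is therefore a proper algebraic subset of the open set $\mathcal{M}$, which has empty interior and Lebesgue measure zero. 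Thus the spectrum is non-degenerate for generic $\Sigma$.

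\textbf{Main obstacle.} The only delicate point is Step~1. The hypotheses give linearity only for nonnegative combinations on the cone, so one must check carefully that the extension to $\mathrm{Sym}(p)$ is well defined and that (i) transfers to it. The monomial comparison in Step~2 is then routine.

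It is worth recording that (ii) is genuinely needed. Without it, any Schur multiplier with unit diagonal, for example $S \mapsto \lambda S + (1-\lambda)\diag(S)$, is equivariant, and for $|\lambda| < 1$ it maps $\mathcal{M}$ into $\mathcal{M}$.
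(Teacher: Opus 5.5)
Your proof is correct and follows essentially the paper's route: the characters $c_jc_k$ of the conjugation action of $\mathrm{D}_+(p)$ force $f$ to be a Schur multiplier, (ii) then kills the off-diagonal coefficients and (iii) sets the diagonal ones to $1$, and your cone-to-$\mathrm{Sym}(p)$ extension and discriminant argument make rigorous two points the paper passes over quickly. There is one slip, confined to your closing aside: $S \mapsto \lambda S + (1-\lambda)\diag(S)$ is the Schur multiplier by $\lambda J + (1-\lambda)I$ (with $J$ the all-ones matrix), which preserves $\mathcal{M}$ only for $-1/(p-1) \le \lambda \le 1$, so for $p \ge 3$ your range $|\lambda|<1$ is too large (e.g.\ $p=3$, $\lambda=-0.9$, $S = J+\epsilon I$ gives the eigenvalue $\epsilon-0.8<0$), and you should restrict to, say, $0\le\lambda<1$.
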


The condition (iii) is a normalisation; without it, scalar multiples $f = c \cdot \diag$ for $c > 0$ also satisfy (i) and (ii), but produce identical MAPCA solutions up to a global rescaling of the spectrum. Conditions (i) and (ii) together encode the structural requirement that $f$ be an equivariant projector onto the fixed-point set of the relevant group action; (iii) selects the canonical representative within the resulting one-parameter family.

\subsection{Proof}

\begin{proof}
The argument proceeds in four steps.

\emph{Step 1: Linear $\mathrm{D}_+(p)$-equivariant maps are Schur products.} The space of symmetric $p \times p$ matrices decomposes under the conjugation action of $\mathrm{D}_+(p)$ into one-dimensional eigenspaces: the basis matrix $e_i e_i^{\top}$ transforms under $C \mathrel{=} \diag(c_1, \ldots, c_p)$ by the scalar $c_i^2$, and the basis matrix $e_i e_j^{\top} + e_j e_i^{\top}$ (for $i \neq j$) transforms by the scalar $c_i c_j$. These eigenspaces are pairwise distinct as $C$ varies over $\mathrm{D}_+(p)$, so any linear $\mathrm{D}_+(p)$-equivariant endomorphism of the symmetric matrices must preserve each eigenspace and act on it as a scalar. Hence $f$ has the form
\[
f(\Sigma)_{ij} = a_{ij} \Sigma_{ij}
\]
for some fixed symmetric matrix $A = (a_{ij}) \in \mathbb{R}^{p \times p}$; equivalently, $f(\Sigma) = A \odot \Sigma$ is the Schur (Hadamard) product. The Schur product theorem requires $A$ to be positive semi-definite for $f$ to map $\mathcal{M}$ into $\mathcal{M}$.

\emph{Step 2: Projection into $\mathcal{M}_{\mathrm{diag}}$ forces $A$ diagonal.} By condition (ii), $f(\Sigma)$ is diagonal for all $\Sigma \in \mathcal{M}$, so the off-diagonal entries $(A \odot \Sigma)_{ij} = a_{ij} \Sigma_{ij}$ must vanish for all $\Sigma$ and all $i \neq j$. Since the off-diagonal entries of $\Sigma$ are generically non-zero on $\mathcal{M}$, this forces $a_{ij} = 0$ for $i \neq j$. Hence $A$ is diagonal.

\emph{Step 3: Normalisation forces $A = I$.} By condition (iii), $f(I) = A \odot I = A$ (since $A$ is diagonal by Step 2 and the diagonal of $I$ is $(1, \ldots, 1)$). The condition $f(I) = I$ then gives $A = I$.

\emph{Step 4: Conclusion.} With $A = I$, we have $f(\Sigma) = I \odot \Sigma = \diag(\Sigma_{11}, \ldots, \Sigma_{pp})$, the diagonal extractor. This is precisely the IPCA metric $M = D$. The spectrum of $D^{-1} \Sigma$ coincides with the spectrum of the correlation matrix of $\Sigma$, which is generically non-degenerate.
\end{proof}

\subsection{Remarks}

\begin{remark}[The variance-maximisation framing]
Theorem~\ref{thm:uniqueness} can be equivalently phrased as a variance-maximisation result. Among linear data-derived metrics respecting $\mathrm{D}_+(p)$-equivariance and producing outputs in the fixed-point set, IPCA is the unique normalised choice for which the MAPCA variance-maximisation problem yields a non-degenerate spectrum. The non-degeneracy condition is what makes the variance-maximisation criterion non-vacuous: at the whitening endpoint $f(\Sigma) = \Sigma$ the spectrum collapses to all-ones and the variance-maximisation problem becomes degenerate, with any orthonormal $W$ a solution. Theorem~\ref{thm:uniqueness} rules this out by requiring the metric to project into the fixed-point set $\mathcal{M}_{\mathrm{diag}}$.
\end{remark}

\begin{remark}[Recovery of Theorem~7 of~\cite{mapca}]
Theorem~\ref{thm:uniqueness} implies, and strengthens, Corollary~8 of~\cite{mapca}. The implication follows because the diagonal extractor $f = \diag$ satisfies the equivariance condition~\eqref{eq:f-equivariance}, and hence $\widetilde{D} = C D C = \diag(c_i^2 \sigma_i^2)$ for arbitrary diagonal $C$, which is the condition under which MAPCA solutions transform equivariantly. The strengthening is that Theorem~\ref{thm:uniqueness} characterises $\diag$ as the \emph{unique} linear data-derived metric with this property, rather than merely exhibiting it as one such metric.
\end{remark}

\begin{remark}[Extension to signed diagonals and permutation-diagonal subgroups]
The same machinery extends to broader subgroups of $\mathrm{GL}(p)$. For the signed diagonal subgroup---allowing reflections $c_i \in \mathbb{R} \setminus \{0\}$, including negative entries---the eigenvalue structure of the conjugation action is unchanged (each basis matrix transforms by $c_i c_j \in \mathbb{R} \setminus \{0\}$), and Theorem~\ref{thm:uniqueness} holds verbatim with $\mathrm{D}_+(p)$ replaced by the full diagonal subgroup. For the permutation-diagonal hybrid---generated by diagonal rescalings together with permutations of variables---the equivariance condition adds a permutation-symmetry requirement on $A$ that further constrains it; the unique linear equivariant projector onto $\mathcal{M}_{\mathrm{diag}}$ with $f(I) = I$ remains $f = \diag$, since the diagonal extractor is itself permutation-equivariant.
\end{remark}

\begin{remark}[The role of linearity]
The linearity assumption is essential to the structural argument. Without it, nonlinear $\mathrm{D}_+(p)$-equivariant maps $f : \mathcal{M} \to \mathcal{M}$ exist and provide alternative data-derived metrics outside the Schur-product class. The $\beta$-family $f(\Sigma) = \Sigma^{\beta}$ at non-trivial $\beta$ is a natural nonlinear example, but as Section~\ref{sec:symmetry} showed, it fails the $\mathrm{D}_+(p)$-equivariance condition itself and so does not provide a counterexample within the equivariant class. A characterisation of nonlinear equivariant metrics, and of how IPCA sits within them, is a natural object for follow-up work.
\end{remark}

\subsection{Scope}

Theorem~\ref{thm:uniqueness} characterises IPCA within a specific class---\emph{linear data-derived metrics}---under a precise equivariance condition. It does not claim that IPCA is the unique scale-invariant representation method in any absolute sense; nonlinear methods and methods not derived from the data covariance escape the theorem's reach. What it does claim is that within the class of representations most naturally compared to standard PCA (linear, data-derived, finite-dimensional), the structural requirement of tensorial equivariance under diagonal rescaling combined with projection into the fixed-point set has a unique solution, and that solution is IPCA.

This is the sense in which IPCA earns its place in the MAPCA family as a privileged GDL construction: not by virtue of having the diagonal metric (a property) but by being the unique linear equivariant projector onto the fixed-point set of the equivariance group (a characterisation). The dictionary of Section~\ref{sec:mapping} identifies the geometric structure; Theorem~\ref{thm:uniqueness} pins it down formally.

\section{Deeper Bridges}\label{sec:bridges}

The dictionary developed in Section~\ref{sec:mapping} captures the linear, constant-metric, single-layer instance of geometric deep learning. Three natural extensions lift this dictionary outward, each relaxing a different limiting feature of MAPCA and bridging into a different region of the 5G design space. Section~\ref{sec:kernel} relaxes linearity by replacing the metric $M$ with a kernel, extending MAPCA into the kernel/RKHS setting and pointing toward the \emph{Geodesics} tier where the metric is allowed to vary nonlinearly. Section~\ref{sec:graphs} relaxes the Euclidean domain by replacing the data covariance with a graph operator, extending MAPCA into the \emph{Graphs} tier and recovering spectral graph methods as direct instances. Section~\ref{sec:deep} relaxes the single-layer architecture by stacking $M$-aware partial isometries, extending MAPCA into proper deep equivariant networks. Each bridge is a separate piece of work in its own right; we sketch their structure here to indicate where the dictionary leads when its restrictions are dropped.

\subsection{Kernel PCA as Nonlinear Extension}\label{sec:kernel}

Kernel principal component analysis~\cite{scholkopf1998} is the standard nonlinear generalisation of PCA. Given a positive-definite kernel $k : \mathcal{X} \times \mathcal{X} \to \mathbb{R}$, kernel PCA computes the eigenvectors of the kernel Gram matrix and produces nonlinear principal components in the implicit feature space $\mathcal{H}$ induced by $k$.

The GDL reading is the natural lift of the MAPCA dictionary into the nonlinear setting. Where MAPCA encodes the geometric prior in a positive-definite matrix $M$ acting on a finite-dimensional Euclidean space $\mathbb{R}^p$, kernel PCA encodes the geometric prior in a positive-definite kernel $k$ acting on a typically infinite-dimensional reproducing kernel Hilbert space $\mathcal{H}$. The dictionary entry can be stated cleanly: the choice of kernel $k$ in kernel PCA is the nonlinear, function-space counterpart of the choice of metric $M$ in MAPCA, and both choices are choices of geometric prior in the GDL sense.

Several standard kernel families admit GDL interpretations. The RBF kernel $k(x, y) = \exp(-\| x - y \|^2 / 2\sigma^2)$ encodes a smoothness prior at scale $\sigma$; polynomial kernels $k(x, y) = (\langle x, y \rangle + c)^d$ encode the prior that polynomial features up to degree $d$ are the appropriate representation; spectral kernels derived from physical operators encode application-specific geometries. In each case, the choice of kernel is structurally analogous to the choice of $M$ in MAPCA, and the resulting representation inherits the equivariance properties of its kernel under the relevant group action.

In the 5G taxonomy, kernel methods sit in the \emph{Geodesics} tier when the kernel encodes a manifold structure, or in the \emph{Groups} tier when the kernel is invariant under a group acting on $\mathcal{X}$. The architectural reading of Section~\ref{sec:primitive} lifts intact: kernel PCA is a single equivariant feature map, with the partial-isometry constraint generalised from finite-dimensional $W^{\top} M W = I_k$ to the corresponding RKHS orthonormality. The structural insight is that kernel PCA is the natural nonlinear extension of MAPCA, with the metric generalised to the kernel; this places the entire family of kernel methods on the same conceptual footing as MAPCA, with the choice of kernel reading as a geometric prior in the GDL sense.

\subsection{Spectral Methods as MAPCA on Graphs}\label{sec:graphs}

Spectral methods on graphs---Laplacian eigenmaps~\cite{belkin2003}, diffusion maps~\cite{coifman2006}, normalised spectral clustering~\cite{vonluxburg2007}---are widely used dimensionality reduction techniques for data with graph structure. They proceed by computing eigenvectors of graph operators and using the leading eigenvectors as low-dimensional embeddings.

The GDL reading is again a direct lift of the MAPCA dictionary. A graph with vertex set $\mathcal{V}$ and edge weights gives rise to an adjacency matrix $A$, a degree diagonal $D = \diag(\deg(v_i))$, and a graph Laplacian $L = D - A$ together with its normalised variants. The generalised eigenproblem central to spectral methods has the form
\begin{equation}
L v = \lambda D v,
\label{eq:graph}
\end{equation}
which is structurally identical to the MAPCA eigenproblem $\Sigma w = \lambda M w$. The Laplacian $L$ plays the role of the operator $\Sigma$; the degree diagonal $D$ plays the role of the metric $M$. The choice of normalisation---unnormalised eigenmaps at $M = I$, random-walk and symmetric normalised Laplacians at $M = D$---corresponds to different metric choices within a MAPCA-like family on graphs.

In the dictionary, this places spectral graph methods in the \emph{Graphs} tier of the 5G taxonomy, with the same metric-choice freedom that distinguishes IPCA from standard PCA in the linear setting. Unnormalised Laplacian eigenmaps correspond to the ``standard PCA'' choice $M = I$ on the graph; random-walk and symmetric normalised methods correspond to the diagonal-metric IPCA-style choice $M = D$. Diffusion maps~\cite{coifman2006} add a kernel-derived adjustment for density bias, mixing aspects of Section~\ref{sec:kernel} with those of Section~\ref{sec:graphs}.

The structural insight is that spectral graph methods are direct MAPCA-style constructions, with the data covariance $\Sigma$ replaced by the graph Laplacian and the metric $M$ drawn from the same family of choices distinguishing IPCA from PCA in the linear case. The dictionary developed in Section~\ref{sec:mapping} applies essentially verbatim, with the \emph{Groups} tier replaced by the \emph{Graphs} tier. There is also a clean potential generalisation worth flagging: just as the $\beta$-family on $\Sigma$ interpolates between PCA and whitening, a $\beta$-family on graph Laplacians $L^{\beta}$ would parameterise a continuous path of spectral methods, the structural and empirical content of which would be a natural object of separate study.

\subsection{Deep MAPCA}\label{sec:deep}

The architectural reading of Section~\ref{sec:primitive}---MAPCA as a single equivariant linear layer with constraint $W^{\top} M W = I_k$---invites a deep generalisation: stack such layers, interleave with appropriate nonlinearities, and recover a fully deep architecture in which each layer carries its own geometric prior. This is the most substantial of the three bridges and the one furthest from the existing literature, so we develop it in greater detail than the previous two and commit to specific choices for the basic construction.

\paragraph{The basic construction.}

A deep MAPCA architecture of depth $L$ is specified by a sequence of feature dimensions $d_0, d_1, \ldots, d_L$, a sequence of metric matrices $M^{(0)}, M^{(1)}, \ldots, M^{(L-1)}$ with $M^{(\ell)} \in \mathcal{M}_{d_\ell}$, and weight matrices $W_\ell \in \mathbb{R}^{d_{\ell-1} \times d_\ell}$ for $\ell = 1, \ldots, L$ satisfying the layer-wise orthonormality constraint
\[
W_\ell^{\top} M^{(\ell-1)} W_\ell = M^{(\ell)} \quad (\ell < L),
\]
with $W_L^{\top} M^{(L-1)} W_L = I_{d_L}$ at the final layer, terminating in the Euclidean output space. In the language of Section~\ref{sec:primitive}, each $W_\ell$ is a partial isometry from $(\mathbb{R}^{d_\ell}, M^{(\ell)})$ into $(\mathbb{R}^{d_{\ell-1}}, M^{(\ell-1)})$---a metric-aware linear layer with input-side and output-side geometries both specified.

The forward pass alternates these partial isometries with a metric-aware nonlinearity $\sigma_\ell$:
\[
x^{(\ell)} = \sigma_\ell\!\left(W_\ell^{\top} x^{(\ell-1)}\right),
\]
with $x^{(0)} \in \mathbb{R}^{d_0}$ the input and $x^{(L)} \in \mathbb{R}^{d_L}$ the output representation.

\paragraph{The metric-aware nonlinearity.}

The nonlinearity must respect the geometric prior at each layer. We commit to the simplest specific choice: elementwise application of a standard scalar activation in $M$-orthonormal coordinates,
\[
\sigma_\ell(y) = (M^{(\ell)})^{1/2} \, \sigma\!\left( (M^{(\ell)})^{-1/2} y \right),
\]
where $\sigma$ on the right is a standard elementwise activation (ReLU, GELU, or similar) and $(M^{(\ell)})^{\pm 1/2}$ are the symmetric square roots. The transform-then-activate-then-transform-back pattern makes the activation interact with the metric structure consistently: the nonlinearity acts on each direction in proportion to its $M^{(\ell)}$-natural scale, rather than on arbitrary input coordinates. When $M^{(\ell)} = I$ the construction reduces to ordinary pointwise activation and the architecture becomes a standard feedforward network with orthonormal weight constraints; for general $M^{(\ell)}$ the nonlinearity is implicitly metric-aware.

\paragraph{The choice of metric sequence.}

The sequence $M^{(0)}, \ldots, M^{(L-1)}$ is the deep MAPCA prior, and three natural choices commit to different design philosophies.

The \emph{constant-prior} construction sets $M^{(\ell)} = M^{(0)}$ for all $\ell$. The geometric prior is preserved through depth and the architecture is the deep generalisation of a single MAPCA choice---deep standard PCA at $M^{(0)} = I$, deep IPCA at $M^{(0)} = D$, the deep $\beta$-family at $M^{(0)} = \Sigma^{\beta}$.

The \emph{data-derived} construction sets $M^{(\ell)} = f(\Sigma^{(\ell)})$ for a fixed rule $f$, where $\Sigma^{(\ell)}$ is the empirical covariance of layer-$\ell$ representations across the training distribution. For $f = \diag$ this gives deep IPCA, with the diagonal-metric construction repeated layer by layer; for $f(\Sigma) = \Sigma^{\beta}$ this gives the deep $\beta$-family with explicit metric updates at each layer; for $f = \mathrm{id}$ it gives iterated whitening, where each layer enforces unit covariance on its output.

The \emph{learned-prior} construction treats each $M^{(\ell)}$ as a parameter on the manifold of positive-definite matrices, optimised end-to-end alongside $W_\ell$. The orthonormality constraint becomes a manifold constraint and the resulting optimisation lies on a product of generalised Stiefel manifolds---the natural Riemannian setting for constrained learning in this regime.

\paragraph{Equivariance through depth.}

The deep architecture inherits the equivariance properties of Section~\ref{sec:symmetry} layer by layer. If the input is transformed by $C \in \mathrm{GL}(d_0)$ and the input metric transforms tensorially as $\widetilde{M}^{(0)} = C M^{(0)} C^{\top}$, then by induction the representations at all subsequent layers transform compatibly when the metric sequence transforms tensorially at every layer.

Within the three constructions above, deep IPCA (data-derived with $f = \diag$) inherits strict diagonal scale invariance at every layer, since $\diag(C \Sigma C) = C D C$ for diagonal $C$; the deep $\beta$-family at intermediate $\beta$ does not, for the same reason intermediate $\beta$ fails the tensorial condition in the single-layer setting. The constant-prior and learned-prior constructions inherit equivariance only insofar as the metric choice itself respects the relevant group action. Deep IPCA is therefore the privileged construction within the data-derived family: it carries strict scale invariance through arbitrary depth, while the deep $\beta$-family at intermediate $\beta$ does not.

\paragraph{Relation to existing architectures.}

Several specific points in the deep MAPCA design space recover or relate to existing constructions. Deep MAPCA with $M^{(\ell)} = I$ for all $\ell$ recovers a standard fully-connected deep network with orthonormal weight constraints---a constrained variant of multilayer perceptrons studied for its training stability properties. Deep MAPCA with $M^{(\ell)} = \diag(\Sigma^{(\ell)})$ at each layer recovers a hard-constrained version of batch-normalised feedforward networks: each layer's output is enforced to be diagonally standardised, in the same spirit as batch normalisation but as a strict constraint rather than a soft normalisation. Deep MAPCA with $M^{(\ell)} = \Sigma^{(\ell)}$ at each layer recovers iterated whitening, which has been studied in the self-supervised learning literature as a regularisation strategy~\cite{ermolov2021}.

The deep MAPCA framework therefore subsumes several existing deep architecture conventions as specific points in a continuous design space and identifies the metric sequence $\{M^{(\ell)}\}$ as the natural design parameter for systematically varying between them. A complete development of the deep MAPCA programme---the choice of nonlinearity beyond the simple elementwise form, the optimisation theory on product Stiefel manifolds, the empirical evaluation against standard architectures, and the connection to existing equivariant networks---is beyond the scope of this positioning paper. The construction sketched here is intended as a concrete bridge from the linear MAPCA framework into deep equivariant networks, not as a complete theory.

\section{Discussion}\label{sec:discussion}

The dictionary developed across Section~\ref{sec:mapping}, the formal uniqueness characterisation of Section~\ref{sec:uniqueness}, and the bridges sketched in Section~\ref{sec:bridges} together form the contribution of this paper. Several open directions follow naturally.

The first concerns the implications of the dictionary for tabular-data geometric deep learning. Tabular data---with no obvious translation, permutation, or manifold structure to exploit---has remained a notable gap in the GDL programme, with permutation invariance of columns (as in Deep Sets~\cite{zaheer2017}) often serving as the default symmetry assumption when no other is available. The MAPCA dictionary suggests an alternative: the metric $M$ as the geometric prior on tabular data, with different $M$-choices encoding different beliefs about the appropriate geometry of feature space. IPCA's diagonal metric provides one natural prior (scale invariance under arbitrary diagonal rescaling, uniquely characterised by Theorem~\ref{thm:uniqueness}); the $\beta$-family provides a continuous family of priors with spectral compression as a tunable parameter; and the deep MAPCA construction of Section~\ref{sec:deep} lifts this into a full deep architecture for tabular data with metric-aware constraints at every layer. Whether this gives rise to a practically useful deep architecture for tabular tasks---competitive with the gradient-boosted-tree baselines that have proved difficult to displace---is an empirical question. The dictionary makes the structural framing available; the answer requires work beyond the scope of this paper.

The second direction concerns the three bridges of Section~\ref{sec:bridges} themselves. Each is a sketch rather than a developed theory. Kernel MAPCA (Section~\ref{sec:kernel}) extends naturally to a kernel $\beta$-family with kernels interpolating between linear and nonlinear regimes; the structural and empirical content of this family would be a natural object of study. Spectral MAPCA on graphs (Section~\ref{sec:graphs}) admits the $L^{\beta}$ construction flagged at the close of that subsection, with potential applications in spectral clustering and graph signal processing. Deep MAPCA (Section~\ref{sec:deep}) opens the largest research programme: the choice of nonlinearity beyond elementwise activations, the optimisation theory on product generalised Stiefel manifolds, and the empirical evaluation against existing deep architectures are all open. Each of these bridges is, in principle, a separate paper.

A third direction concerns the linearity assumption in Theorem~\ref{thm:uniqueness}. The theorem characterises IPCA among linear data-derived metrics, but the $\beta$-family and other natural MAPCA-style metrics are nonlinear in $\Sigma$. A characterisation of nonlinear equivariant metrics, and of how IPCA sits among them, would extend the structural result of Section~\ref{sec:uniqueness} beyond the Schur-product class. This is a natural follow-up question; whether it admits as clean a uniqueness statement is itself an open problem.

\section{Conclusion}\label{sec:conclusion}

We have developed a precise dictionary positioning Metric-Aware Principal Component Analysis within the geometric deep learning framework. The dictionary is organised across six axes---domain placement, symmetry group, equivariance, invariance, architectural primitive, and geometric prior---and identifies the metric $M$ in MAPCA with the choice of group representation in GDL, with each canonical MAPCA construction corresponding to a specific geometric prior in the GDL sense. The conceptual content of the positioning is anchored by Theorem~\ref{thm:uniqueness}, which characterises IPCA as the unique linear data-derived metric within the MAPCA family that respects $\mathrm{D}_+(p)$-equivariance and projects into the fixed-point set of the action---a characterisation that pins down formally the privileged status of IPCA that the dictionary makes visible.

The primary contribution is the dictionary itself; the uniqueness theorem is the technical anchor. With the dictionary in place, IPCA's scale-invariance property is no longer a coincidence of the diagonal metric but the structural consequence of the GDL design principle applied to the diagonal subgroup of $\mathrm{GL}(p)$; the SSL correspondences of the MAPCA paper become legible as different choices of geometric prior within a common design space; and the natural extensions into kernel methods, spectral graph methods, and deep architectures fall out as the directions in which the linear constant-metric core of MAPCA can be lifted.

Several open directions follow, including the deep MAPCA programme sketched in Section~\ref{sec:deep}, the implications of the geometric-prior reading for tabular-data GDL, and the characterisation of nonlinear equivariant metrics beyond the Schur-product class to which Theorem~\ref{thm:uniqueness} applies. These are taken up in the Discussion as the principal directions in which the programme this paper opens can be continued.


\end{document}